\newcommand{\partitle}[1]{\smallskip \noindent \textbf{#1.}}
\begin{document}
\title{Lightweight Time Series Data Valuation on Time Series Foundation Models via In-Context Finetuning}

\titlerunning{Time Series Data Valuation on TSFMs via In-context Finetuning}

\author{
Shunyu Wu\inst{1}\inst{2}\thanks{These authors contributed equally to this work. \textsuperscript{\faEnvelope} Dan Li is the corresponding author.} \and
Tianyue Li\inst{1}$^{\star}$ \and
Yixuan Leng\inst{1} \and
Jingyi Suo\inst{1} \and \\
Jian Lou\inst{1}\inst{2} \and
Dan Li\inst{1}\textsuperscript{\faEnvelope} \and
See-Kiong Ng\inst{3}
\email{\{wushy88,lity93,lengyx3,suojy3\}@mail2.sysu.edu.cn,
       \{louj5,lidan263\}@mail.sysu.edu.cn} \quad \email{seekiong@nus.edu.sg}
\institute{
Sun Yat-sen University \\
\and
State Key Laboratory of Internet
Architecture, Tsinghua University, Beijing, China, 100084
\and
National University of Singapore
}
}

\authorrunning{Wu et al.}

\maketitle

\begin{abstract}
Time series foundation models (TSFMs) have demonstrated increasing capabilities due to their extensive pretraining on large volumes of diverse time series data. Consequently, the quality of time series data is crucial to TSFM performance, rendering an accurate and efficient data valuation of time series for TSFMs indispensable. However, traditional data valuation methods, such as influence functions, face severe computational bottlenecks due to their poor scalability with growing TSFM model sizes and often fail to preserve temporal dependencies.
In this paper, we propose \textbf{LTSV}, a \textbf{L}ightweight \textbf{T}ime \textbf{S}eries \textbf{V}aluation on TSFMS via in-context finetuning. Grounded in the theoretical evidence that in-context finetuning approximates the influence function, LTSV estimates a sample's contribution by measuring the change in context loss after in-context finetuning, leveraging the strong generalization capabilities of TSFMs to produce robust and transferable data valuations. To capture temporal dependencies, we introduce temporal block aggregation, which integrates per-block influence scores across overlapping time windows. Experiments across multiple time series datasets and models demonstrate that LTSV consistently provides reliable and strong valuation performance, while maintaining manageable computational requirements. Our results suggest that in-context finetuning on time series foundation models provides a practical and effective bridge between data attribution and model generalization in time series learning. The code for LTSV is available at \url{https://github.com/b1302550313/LTSV}.

\renewcommand{\thefootnote}{\fnsymbol{footnote}}
\footnotetext[2]{This paper has been accepted as a full paper in DASFAA 2026.}
\renewcommand{\thefootnote}{\arabic{footnote}}

\keywords{Time Series Data Valuation \and Time Series Foundation Model.}
\end{abstract}

\section{Introduction}
\label{sec:intro}

Recent advances in time series foundation models (TSFMs)~\cite{goswami2024moment,rasul2023lag,liutimer2024timer,woo2024unified,liu2025sundial,arango2025chronosx}, built upon large and diverse temporal corpora, have greatly advanced generalizable and adaptive modeling across domains such as finance, healthcare, and climate science. However, the performance of these large models fundamentally depends on the quality of their training data~\cite{wangoptimal,fang2023bayotide}, making time series data valuation an indispensable component of TSFM development. Time series data valuation aims to quantify the contribution of individual time series samples to the overall model performance, enabling principled data selection, quality-aware training, and efficient resource allocation~\cite{sun20242d,nguyen2023context,zhang2025fairshare}. Accurate data valuation helps identify informative and trustworthy time series samples while filtering out corrupted or unrepresentative ones, ultimately improving both data efficiency and model performance~\cite{kwon2023data}. As TSFMs continue to expand in scale and scope, it becomes a compelling task to develop effective and scalable data valuation methods tailored to TSFMs.

Existing Time series data valuation methods, including TimeInf~\cite{zhang2024timeinf} and TimeShap~\cite{bento2021timeshap}, extend theoretically grounded influence functions~\cite{hampel1974influence} and Shapley-values~\cite{shapley1953value} to temporal settings. These methods can effectively capture the temporal dependence nature of time series data~\cite{kunsch1984infinitesimal} by incorporating temporal segmentation and sequence-aware analysis. Despite their effectiveness and theoretical foundations, these methods unfortunately inherit the computational burdens of their underlying theory: influence function-based approaches require costly Hessian and gradient calculations, while Shapley-based methods incur exponential subset sampling. Consequently, although effective for small-scale linear or recurrent models, these approaches are infeasible for large-scale TSFMs that typically come with millions or billions of parameters.

To address these challenges, we propose LTSV, a \textbf{L}ightweight \textbf{T}ime \textbf{S}eries \textbf{V}aluation on TSFMS via in-context finetuning designed to achieve high-fidelity data valuation. Our approach builds upon the classical influence function framework, adapting it to TSFMs to leverage their strong representational capacity and finetuning flexibility for estimating time series data values. 
To make this estimation lightweight, LTSV approximates the conventional influence function by performing one-step in-context finetuning, treating the context data as validation data to compute the change in context loss, and the target data as training data in the influence function framework. In essence, LTSV requires only a single gradient update per sample, therefore avoiding the computational burden caused by the Hessian-related terms in influence functions. It can be interpreted as a localized influence approximation, capturing how a single time series data point perturbs the model within its immediate optimization neighborhood. Consequently, this formulation significantly reduces computational complexity while maintaining the fidelity of estimating each sample’s contribution.
To further preserve temporal dependencies, LTSV incorporates temporal block aggregation, dividing sequences into overlapping time blocks and integrating data values across them to capture inter-temporal relationships. Extensive experiment evaluations demonstrate that LTSV not only provides faithful and effective valuations on the foundation models themselves but also exhibits strong generalization to downstream models, even enabling high-quality data identified on TSFMs to benefit diverse contentional time series models.

\partitle{Contributions} Our main contributions are as follows:
\begin{itemize}[leftmargin=*]
    \item We tackle the important and emerging problem of performing accurate and scalable data valuation on modern time series foundation models, for which existing methods are computationally prohibitive.
    \item We propose LTSV, a novel time series data valuation framework that repurposes in-context finetuning on TSFMs for lightweight time series data valuation, overcoming computational hurdles while preserving the fidelity of data value estimation.
    \item Extensive experiments on five popular datasets corroborate that LTSV provides reliable valuations on three representative TSFM model types and even generalizes effectively to diverse conventional time series models, enabling high-quality data to improve performance across diverse architectures while remaining computationally efficient.
\end{itemize}

\section{Related Work and Preliminaries}
\label{rw}

\subsection{Time Series Foundation Models}
Recent advancements in time series foundation models (TSFMs) have significantly reshaped the landscape of time series analysis and modeling. Similar to large language models (LLMs), TSFMs are pretrained on massive and heterogeneous time series corpora to learn universal temporal representations that can be adapted to various downstream tasks. These models adopt diverse architectures, including encoder-only Transformers~\cite{goswami2024moment,woo2024unified}, decoder-only autoregressive Transformers~\cite{garza2023timegpt,liutimer2024timer,das2024decoder}, and mixture-of-experts based sparse structures~\cite{xiaoming2025time,liu2024moirai,wu2025unlocking}. Model sizes now range from tens of millions to over 2.4 billion parameters~\cite{xiaoming2025time}, trained on tens of billions of temporal observations collected from diverse real-world domains.

While this scaling has led to increasingly powerful and versatile temporal representations, it also amplifies the dependency of TSFM performance on high-quality time series data. Low-quality or biased time series samples can degrade the learned representations and reduce downstream generalization ability~\cite{zhang2024irregular,liu2024time}.
Therefore, accurately valuing and selecting high-quality time series data for TSFMs becomes a fundamental yet underexplored problem, motivating the development of our proposed LTSV.

\subsection{Data Valuation Methods}
\partitle{Time Series Data Valuation} Time series data valuation requires specialized techniques that explicitly consider temporal dependencies and correlations. Building upon classical principles such as the influence function~\cite{hampel1974influence} and Shapley value~\cite{shapley1953value}, several methods have been proposed to extend data attribution into the time series domain. For example, TimeInf~\cite{zhang2024timeinf} adapts the influence function to temporal settings by modeling the effect of perturbing autoregressive processes, while TimeShap~\cite{bento2021timeshap} and WindowShap~\cite{nayebi2023windowshap} generalize Shapley-based attribution through sliding window decomposition and temporal feature aggregation. Subsequent studies extend influence estimation to multivariate settings~\cite{wang2024channel} and incorporate sequence-aware Shapley computations into Transformer architectures~\cite{cheng2025unifying}. Despite these advances, both influence-function and Shapley-based approaches remain computationally expensive, requiring either costly Hessian inversions or exponential subset sampling, and thus making them impractical for large-scale TSFMs with millions of parameters.

\partitle{Data Valuation for Foundation Models} Beyond time series–specific approaches, data valuation on foundation models has been widely explored in the NLP domain. Existing works employ strategies such as rule-based scoring~\cite{penedo2024fineweb}, semantic deduplication metrics~\cite{tirumala2023d4}, and proximity-based quality assessment via perplexity~\cite{du2022glam} or embedding similarity~\cite{xie2023data}, while recent advances leverage powerful reference LLMs for quality estimation~\cite{wettig2024qurating,sachdeva2024train,yu2024mates}.
However, these methods rely on text-centric characteristics, making them ill-suited for TSFMs characterized by non-stationary, correlated, and multi-channel dynamics. To address these gaps, our LTSV framework leverages the in-context fine-tuning to approximate influence-based valuation in a lightweight and temporally aware manner.


\subsection{Preliminaries}
\label{sec:prelim}

We revisit the classical Influence Function~\cite{koh2017understanding}, 
which serves as the theoretical basis for our proposed time series data valuation framework.
 
\begin{theorem}[Classical Influence Function]
\label{thm:influence}
Let the target dataset be $\mathcal{D}_{\mathrm{target}} = \{(x_i, y_i)\}_{i=1}^{N}$ and the context dataset be $\mathcal{D}_{\mathrm{context}} = \{(x'_j, y'_j)\}_{j=1}^{M}$. 
Let $\mathcal{L}(x, y; \theta)$ denote the loss function of a model parameterized by $\theta$ 
for a target sample. 
The influence function quantifies the effect of a small perturbation to a target sample $z = (x, y)$ on the context loss of $z' = (x', y')$ under the optimal parameters $\theta^*$, and is formally defined as
\begin{equation}
\label{eq:infl-ori}
\text{Infl}(z, z') 
= \nabla_\theta \mathcal{L}(x', y'; \theta^*)^\top \Delta \theta 
\approx 
- \nabla_\theta \mathcal{L}(x', y'; \theta^*)^\top 
H_{\theta^*}^{-1} 
\nabla_\theta \mathcal{L}(x, y; \theta^*),
\end{equation}
where $H_{\theta^*} = \frac{1}{N} \sum_{i=1}^{N} \nabla_\theta^2 \mathcal{L}(x_i, y_i; \theta^*)$ 
is the Hessian matrix of the empirical loss at $\theta^*$, 
and $\Delta \theta$ represents the parameter change resulting from an infinitesimal upweighting of $z$.
The approximation arises from a first-order Taylor expansion around $\theta^*$,
which neglects higher-order terms in $\Delta \theta$.
\end{theorem}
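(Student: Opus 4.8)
The plan is to reproduce the classical derivation of \cite{koh2017understanding} by linearizing the first-order optimality condition of an $\epsilon$-perturbed empirical risk. First I would introduce the empirical risk on the target set, $R(\theta) = \frac{1}{N}\sum_{i=1}^{N}\mathcal{L}(x_i, y_i; \theta)$, so that $\theta^* = \arg\min_\theta R(\theta)$ and the stated Hessian is exactly $H_{\theta^*} = \nabla_\theta^2 R(\theta^*)$. Upweighting the target sample $z=(x,y)$ by $\epsilon$ gives the perturbed objective $R_\epsilon(\theta) = R(\theta) + \epsilon\,\mathcal{L}(x, y; \theta)$ with minimizer $\theta^*_\epsilon$; here $\Delta\theta$ is identified with $\frac{d\theta^*_\epsilon}{d\epsilon}\big|_{\epsilon=0}$, and the effect on the context loss of $z'=(x',y')$ follows from the chain rule, $\frac{d}{d\epsilon}\mathcal{L}(x', y'; \theta^*_\epsilon)\big|_{\epsilon=0} = \nabla_\theta\mathcal{L}(x', y'; \theta^*)^\top \Delta\theta$, which is the first (exact) equality in \eqref{eq:infl-ori}.

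The central computation is obtaining $\Delta\theta$ in closed form. I would differentiate the stationarity condition $\nabla_\theta R_\epsilon(\theta^*_\epsilon) = \nabla_\theta R(\theta^*_\epsilon) + \epsilon\,\nabla_\theta\mathcal{L}(x, y; \theta^*_\epsilon) = 0$ with respect to $\epsilon$ at $\epsilon = 0$. Using $\nabla_\theta R(\theta^*) = 0$, expanding to first order around $\theta^*$, and dropping the $\epsilon\,\nabla_\theta^2\mathcal{L}$ cross term as higher order, this leaves $\nabla_\theta^2 R(\theta^*)\,\Delta\theta + \nabla_\theta\mathcal{L}(x, y; \theta^*) = 0$, i.e. $\Delta\theta = -H_{\theta^*}^{-1}\nabla_\theta\mathcal{L}(x, y; \theta^*)$. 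Substituting this into the chain-rule expression from the previous step yields $\text{Infl}(z, z') \approx -\nabla_\theta\mathcal{L}(x', y'; \theta^*)^\top H_{\theta^*}^{-1}\nabla_\theta\mathcal{L}(x, y; \theta^*)$, as claimed.

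The steps needing care are the regularity hypotheses that make the linearization legitimate: $\mathcal{L}(\cdot,\cdot;\theta)$ should be twice continuously differentiable in $\theta$, and $\theta^*$ a nondegenerate minimizer so that $H_{\theta^*}\succ 0$ and $H_{\theta^*}^{-1}$ exists. Under these assumptions the implicit function theorem guarantees that $\epsilon\mapsto\theta^*_\epsilon$ is $C^1$ in a neighborhood of $\epsilon=0$, which is exactly what justifies differentiating the stationarity condition termwise. I expect the main obstacle to be making the error control rigorous — verifying that the neglected Taylor remainder and the discarded $\epsilon\,\nabla_\theta^2\mathcal{L}$ term are genuinely $o(\epsilon)$ — but for our purposes it suffices to present the formal first-order argument, consistent with the statement's own caveat that the approximation ``neglects higher-order terms in $\Delta\theta$.''
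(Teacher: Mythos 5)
Your proposal is correct and is precisely the standard Koh--Liang derivation that the paper itself invokes: the paper's ``proof'' is only a one-line citation to \cite{pruthi2020estimating,koh2017understanding}, and your argument (perturbed empirical risk, differentiated stationarity condition, implicit function theorem giving $\Delta\theta = -H_{\theta^*}^{-1}\nabla_\theta\mathcal{L}(x,y;\theta^*)$, then the chain rule on the context loss) is exactly the derivation those references supply. No gaps; your identification of $\Delta\theta$ with $\frac{d\theta^*_\epsilon}{d\epsilon}\big|_{\epsilon=0}$ is the right reading of the theorem's (slightly ambiguous) wording, since the final expression carries no factor of $\epsilon$.
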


\begin{proof}
The derivation follows the first-order influence approximation in \cite{pruthi2020estimating,koh2017understanding}.
\end{proof}

This classical formulation provides a principled mechanism to quantify how a single target sample affects the model's prediction or context loss.
However, computing the inverse Hessian $H_{\theta^*}^{-1}$ is computationally infeasible for large-scale deep or foundation models~\cite{basu2020influence}, 
which motivates the development of more efficient data valuation approaches such as our proposed LTSV.

\section{The Proposed LTSV}

This section presents LTSV, a lightweight time series data valuation method for TSFMs via in-context finetuning, which circumvents the prohibitive computational cost of directly calculating influence functions while approximating them with theoretical support, thereby preserving high fidelity in TS data quality estimation.
We begin by reformulating classical influence functions through a first-order approximation that connects data influence to in-context finetuning (\S\ref{sec:in-context-approx}). 
We then extend this formulation to the time series domain, introducing a hierarchical valuation mechanism that operates at the block, point, and sample levels (\S\ref{sec:ts-adaptation}). 
Finally, we analyze the computational efficiency of the proposed method and demonstrate its scalability to modern TSFMs (\S\ref{sec:efficiency}). An overview of the entire framework is illustrated in Figure~\ref{fig:framework}.

\begin{figure*}[t]
    \centering
    \includegraphics[width=1.0\textwidth]{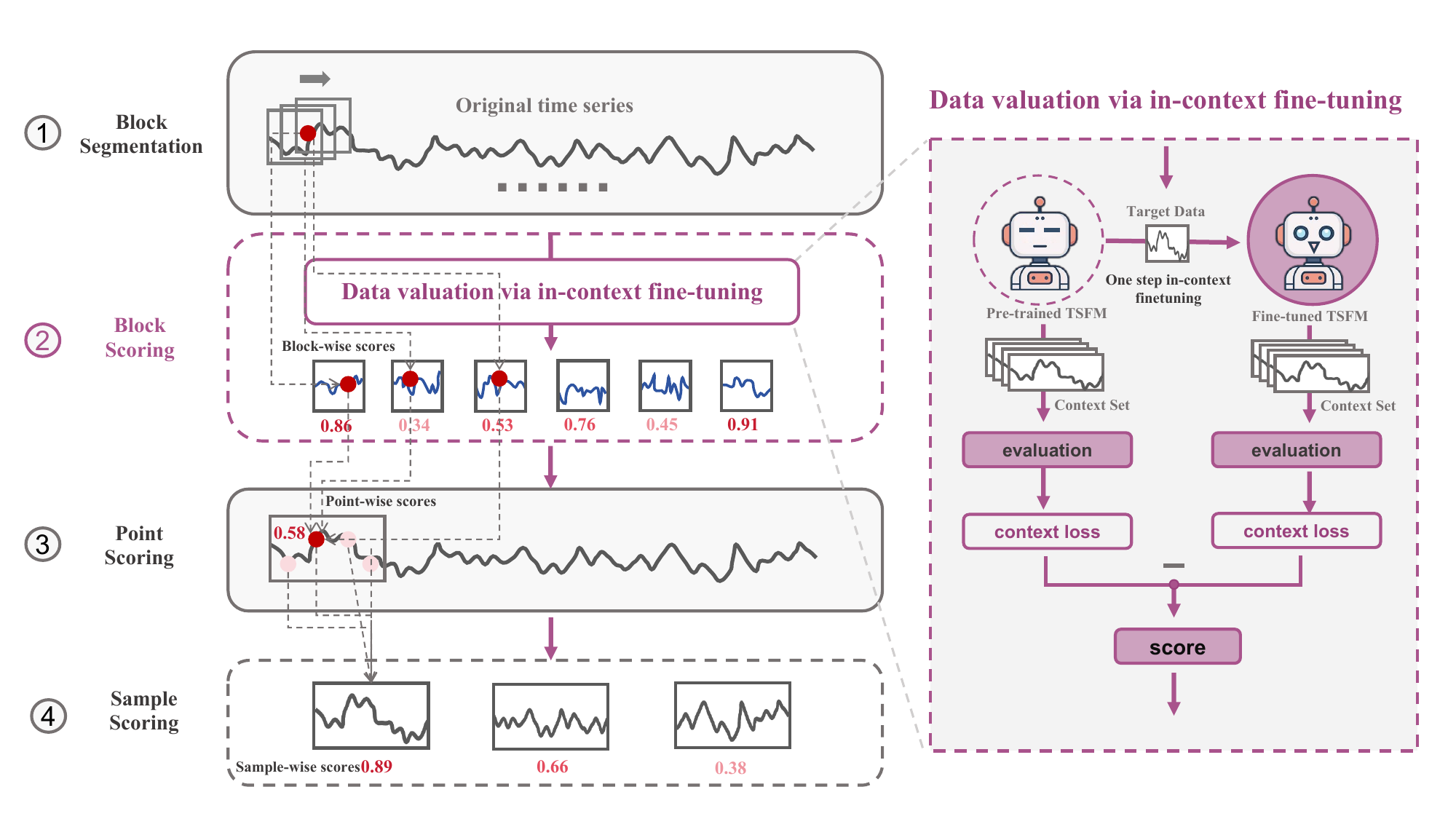}
    \vskip -0.5em
    \caption{Overview of the proposed framework. \textcircled{1} Block Segmentation: the original time series is first divided into sub-sequences via sliding block segmentation. \textcircled{2} Block Scoring: each block is applied to fine-tune the TSFM, and the differences between the pre-trained context losses and fine-tuned losses are calculated as the block-wise quality scores. \textcircled{3} Point Scoring: point-wise scores are aggregated based on the block-wise scores along the original series. \textcircled{4} Sample Scoring: sample-wise scores are generated based on the scores of each point.}
    \label{fig:framework}
    \vspace{-1em}
\end{figure*}

\subsection{Influence Function Approximation via In-Context Finetuning}
\label{sec:in-context-approx}


\begin{definition}[In-Context Finetuning]
\label{def:incontext}
In-context finetuning is a localized adaptation of a pretrained time series foundation model 
$\mathcal{M}_\theta$ that leverages the information in a \emph{target sample} 
$z = (x, y) \in \mathcal{D}_{\mathrm{target}}$ to improve the model's predictive performance 
on a \emph{context sample} $z' = (x', y') \in \mathcal{D}_{\mathrm{context}}$.

Formally, given the original model parameters $\theta$, a single-step in-context fine-tuning updates the model as
\begin{equation}
\theta_{\mathrm{finetuned}} = \theta - \eta \nabla_\theta \mathcal{L}(x, y; \theta),
\end{equation}
where $\eta$ is the learning rate and $\mathcal{L}(x, y; \theta)$ denotes the loss on the target sample $z$.

The effect of the target sample $z$ on the context sample $z'$ can be quantified by the change in context loss after in-context fine-tuning, i.e., 
$\mathcal{L}(x', y'; \theta) - \mathcal{L}(x', y'; \theta_{\mathrm{finetuned}})$.
This provides a simple and tractable approximation to the classical influence function.

\end{definition}

\partitle{Theoretical Evidence}
To provide theoretical support for the connection between in-context fine-tuning and influence functions, we show that the change in context loss induced by a single-step fine-tuning approximates the classical influence of a target sample on a context sample.

\begin{theorem}[Influence Function Approximation via In-Context Finetuning]
\label{thm:incontext}
Given a target sample $z = (x, y)$ and a context sample $z' = (x', y')$, 
the influence of $z$ on $z'$ can be locally approximated by the change in context loss 
after a single in-context fine-tuning step:
\begin{equation}
\label{eq:loss-diff}
\text{Infl}(z, z') \propto 
\mathcal{L}(x', y'; \theta) - \mathcal{L}(x', y'; \theta_{\mathrm{finetuned}}),
\end{equation}
where $\theta_{\mathrm{finetuned}} = \theta - \eta \nabla_\theta \mathcal{L}(x, y; \theta)$
is obtained by updating $\theta$ with one gradient step on the target sample $z$.
\end{theorem}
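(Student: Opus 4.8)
The plan is to establish the claimed proportionality by taking a first-order Taylor expansion of the context loss $\mathcal{L}(x', y'; \cdot)$ around the pretrained parameters $\theta$ and then matching the result against the influence expression in Theorem~\ref{thm:influence}. Concretely, I would first write $\theta_{\mathrm{finetuned}} = \theta - \eta\, \nabla_\theta \mathcal{L}(x, y; \theta)$ from Definition~\ref{def:incontext}, so that the parameter displacement is $\Delta\theta = -\eta\, \nabla_\theta \mathcal{L}(x, y; \theta)$. Expanding the fine-tuned context loss gives $\mathcal{L}(x', y'; \theta_{\mathrm{finetuned}}) = \mathcal{L}(x', y'; \theta) + \nabla_\theta \mathcal{L}(x', y'; \theta)^\top \Delta\theta + \mathcal{O}(\|\Delta\theta\|^2)$, and since $\eta$ is small the higher-order term is negligible. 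Rearranging yields
\begin{equation}
\mathcal{L}(x', y'; \theta) - \mathcal{L}(x', y'; \theta_{\mathrm{finetuned}}) \approx \eta\, \nabla_\theta \mathcal{L}(x', y'; \theta)^\top \nabla_\theta \mathcal{L}(x, y; \theta).
\end{equation}

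Next I would relate the right-hand side to $\text{Infl}(z, z')$ as given in Equation~\eqref{eq:infl-ori}, namely $\text{Infl}(z, z') \approx -\nabla_\theta \mathcal{L}(x', y'; \theta^*)^\top H_{\theta^*}^{-1} \nabla_\theta \mathcal{L}(x, y; \theta^*)$. The key observation is that the loss-difference expression is exactly this bilinear form with the inverse Hessian $H_{\theta^*}^{-1}$ replaced by a scaled identity $\eta I$ (up to the sign convention, which I would track carefully so the ``drop in loss'' corresponds to positive influence). Interpreting $\eta I$ as a crude preconditioner — equivalently, treating the local loss geometry as isotropic, or observing that near a well-conditioned optimum $H_{\theta^*}^{-1} \propto I$ — collapses the Hessian-weighted inner product to the plain gradient inner product, which is precisely what one step of SGD on the target sample measures. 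This delivers $\text{Infl}(z, z') \propto \mathcal{L}(x', y'; \theta) - \mathcal{L}(x', y'; \theta_{\mathrm{finetuned}})$, with the proportionality constant absorbing $\eta$ and the Hessian-scaling approximation.

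I would also note that $\theta$ need not equal $\theta^*$: the pretrained TSFM parameters play the role of the reference point, and the argument is a purely local one in the optimization neighborhood of $\theta$, consistent with the ``localized influence approximation'' framing in the introduction. If desired, one can phrase this more carefully by defining everything relative to $\theta$ rather than $\theta^*$, in which case the Hessian in the influence formula is $H_\theta$ and the identity-preconditioner step is the only approximation beyond the first-order Taylor truncation.

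The main obstacle is the Hessian step: replacing $H_{\theta^*}^{-1}$ by $\eta I$ is not an equality and cannot be justified by a Taylor argument alone, so I expect this to be the delicate point that the proof must either defend (via an isotropy/well-conditioning assumption on the loss surface, or by appealing to the known connection between one-step SGD influence estimators such as TracIn~\cite{pruthi2020estimating} and the classical influence function) or simply acknowledge as the source of the ``$\propto$'' rather than ``$=$''. Everything else — the Taylor expansion, dropping $\mathcal{O}(\eta^2)$ terms, and the sign bookkeeping — is routine.
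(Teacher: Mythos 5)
Your proposal matches the paper's proof essentially step for step: the same first-order Taylor expansion of the context loss around $\theta$, the same identification of the loss drop with $\eta\,\nabla_\theta \mathcal{L}(x', y'; \theta)^\top \nabla_\theta \mathcal{L}(x, y; \theta)$, and the same comparison against Eq.~\eqref{eq:infl-ori} with the sign absorbed into the proportionality. The only difference is presentational: the paper substitutes the SGD step $\Delta\theta = -\eta\,\nabla_\theta \mathcal{L}(x, y; \theta)$ directly into the generic form $\nabla_\theta \mathcal{L}(x', y'; \theta^*)^\top \Delta\theta$ of Eq.~\eqref{eq:infl-ori}, which silently performs exactly the $H_{\theta^*}^{-1} \to \eta I$ replacement you flag explicitly --- so your version is, if anything, more candid about where the ``$\propto$'' actually comes from.
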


\begin{proof}
A single-step gradient update on the target sample $z$ gives $\theta_{\mathrm{finetuned}} = \theta - \eta \nabla_\theta \mathcal{L}(x, y; \theta)$,
where $\eta$ is the learning rate.
This first-order approximation is valid for a sufficiently small $\eta$~\cite{pruthi2020estimating}. Substituting $\Delta \theta = \theta_{\mathrm{finetuned}} - \theta \approx -\eta \nabla_\theta \mathcal{L}(x, y; \theta)$ into Eq.(\ref{eq:infl-ori}) yields
\begin{equation}
\text{Infl}(z, z') \approx -\eta \nabla_\theta \mathcal{L}(x', y'; \theta)^\top \nabla_\theta \mathcal{L}(x, y; \theta).
\label{eq:infl-grad-inner-target}
\end{equation}

To relate this gradient inner product to observable quantities, we perform a first-order Taylor expansion of the context loss around $\theta$:
\begin{equation}
\mathcal{L}(x', y'; \theta_{\mathrm{finetuned}}) 
= \mathcal{L}(x', y'; \theta) 
+ \nabla_\theta \mathcal{L}(x', y'; \theta)^\top (\theta_{\mathrm{finetuned}} - \theta) 
+ \mathcal{O}(\|\theta_{\mathrm{finetuned}} - \theta\|^2).
\end{equation}
Rearranging terms and substituting $\theta_{\mathrm{finetuned}} - \theta = -\eta \nabla_\theta \mathcal{L}(x, y; \theta)$, we obtain
\begin{equation}
\mathcal{L}(x', y'; \theta) - \mathcal{L}(x', y'; \theta_{\mathrm{finetuned}})
\approx \eta \nabla_\theta \mathcal{L}(x', y'; \theta)^\top \nabla_\theta \mathcal{L}(x, y; \theta).
\label{eq:loss-diff-target}
\end{equation}

Comparing Eq.\eqref{eq:infl-grad-inner-target} and Eq.\eqref{eq:loss-diff-target}, we obtain the formula stated in Theorem~\ref{thm:incontext}:
$\text{Infl}(z, z') \propto \mathcal{L}(x', y'; \theta) - \mathcal{L}(x', y'; \theta_{\mathrm{finetuned}})$.
The negative sign is absorbed into the context loss difference, so a positive value indicates that including $z$ reduces the loss on $z'$, reflecting a beneficial impact on model performance.
\end{proof}

Therefore, the influence of a target sample can be locally approximated by the loss reduction on context samples after one-step in-context finetuning~\cite{iter2023context,yu2024mates}. 
This equivalence provides theoretical evidence for approximating the computationally demanding influence function with more computationally efficient approximations via in-context finetuning and context loss calculation, forming the foundation of our lightweight time series data valuation methods for TSFMs presented below.

\subsection{Time Series Data Valuation on TSFM via In-Context Finetuning}
\label{sec:ts-adaptation}



Given a time series dataset $\mathcal{D} = \{ x_1, x_2, \dots, x_T \}$ consisting of $T$ sequential observations, 
where $x_t \in \mathbb{R}^M$ represents the multivariate measurement at time step $t$. 
Here, $M$ denotes the number of time series channels. In forecasting tasks, the model predicts future values based on past observations, so the input itself serves as the target label for prediction.

In the training/finetuning of TSFMs, it is a common practice to partition the original time series dataset $\mathcal{D}$ into a collection of subsequences 
$\{ S_i \}_{i=1}^{N}$, where each $S_i = \{ x_t \}_{t=t_i}^{t_i+L_s-1}$ contains $L_s$ consecutive time points. Our goal is to estimate a data quality value $v(S_i)$ for each target time series sample $S_i$, reflecting its effect on the performance of TSFMs when employed.

\partitle{Block-Level Time Series Data Value Estimation}
Equipped with theoretical support for the approximation capability of in-context finetuning to influence functions, we describe the detailed algorithm for tailoring in-context finetuning to time series data valuation. 
Specifically, we further divide the target time series dataset $\mathcal{D}$ into overlapping blocks 
$\{ B_{k} \}_{k=1}^{K}$ of fixed length $L$, preserving local temporal dependencies~\cite{hall1985resampling,buhlmann2002bootstraps}.
\begin{equation}
B_{k} = \{ x_{k}, x_{k+1}, \dots, x_{k+L-1} \}.
\end{equation}
Each block $B_{k}$ is treated as a data valuation unit. We apply in-context finetuning to TSFMs (e.g., Time-MoE~\cite{xiaoming2025time}, a 2.4B-parameter model) by treating $B_{k}$ as the target sample and $\mathcal{D}_{\text{context}}$ as the context samples, thereby approximating the influence function value of $B_{k}$ by calculating the context loss following Eq.(\ref{eq:loss-diff}):
\begin{equation}
\text{Infl}(B_{k}, \mathcal{D}_{\text{context}}) 
\propto 
\mathcal{L}(\mathcal{D}_{\text{context}}; \theta) 
- \mathcal{L}(\mathcal{D}_{\text{context}}; \theta_{\mathrm{finetuned}}(B_{k})),
\end{equation}
where $\theta_{\mathrm{finetuned}}(B_{k}) = \theta - \eta \nabla_\theta \mathcal{L}(B_{k}; \theta)$ 
denotes the model parameters after in-context finetuning on block $B_{k}$ with learning rate $\eta$.
This loss difference quantifies how much the model improves (or degrades) after exploiting $B_{k}$,
serving as the block-level time series data value:
\begin{equation}
\label{eq:blockvalue}
v(B_{k}) = \mathcal{L}(\mathcal{D}_{\text{context}}; \theta) 
- \mathcal{L}(\mathcal{D}_{\text{context}}; \theta_{\mathrm{finetuned}}(B_{k})).
\end{equation}

In addition, for multivariate time series with $x_t \in \mathbb{R}^M$, our proposed LTSV approach can be directly applied by simply feeding in-context finetuning with multivariate input blocks. This design avoids the need to either construct a high-dimensional parameter matrix or forcibly split channels for separate processing, as is required in traditional influence function methods.

\partitle{Hierarchical Aggregation from Blocks to Samples} With each time point covered by multiple overlapping blocks, we first compute the time point–level data value by averaging the block-level data values over all blocks that contain that point:
\begin{equation}
v(x_t) = \frac{1}{|\mathcal{B}(x_t)|} 
\sum_{B_{k} \in \mathcal{B}(x_t)} v(B_{k}),
\end{equation}
where $\mathcal{B}(x_t)$ denotes the set of blocks covering the time point $x_t$. Finally, we aggregate all time point-level values to obtain a sample-level data value:
\begin{equation}
\label{eq:samplevalue}
v(S_i) = \frac{1}{L_s} \sum_{t=1}^{L_s} v(x_t),
\end{equation}
This hierarchical valuation process preserves both temporal locality and channel structure, yielding a unified scalar measure that quantifies each sample’s contribution to improving the model’s performance.

\subsection{Computational Complexity Analysis}
\label{sec:efficiency}

Traditional influence function requires the inversion of the Hessian matrix $H_{\theta_t}$, 
which incurs a computational cost of $\mathcal{O}(nP^2 + P^3)$ for $n$ target samples and a model with $P$ parameters. More efficient implementations of the influence function adopt a series of numerical techniques, such as matrix-vector products and conjugate Newton methods, to alleviate the computation of the Hessian and its inverse. However, even with such numerical techniques, influence function calculations still incur prohibitive computational costs for TSFMs with millions to billions of parameters.

Alternatively, LTSV circumvents the Hessian-related calculations by in-context finetuning, which suffices to calculate the first-order gradient, thereby 
reducing the complexity to $\mathcal{O}(nP)$. 
This linear scaling arises because only a single gradient computation is required for each block-level data valuation step. 
Moreover, since blocks are processed independently, all computations can be further parallelized across devices.

In summary, LTSV approximates the influence function backed by theoretical evidence in Theorem~\ref{thm:incontext},
while achieving substantial efficiency gains, making it scalable to TSFMs and long time series datasets. The empirical runtime results validating this advantage are presented in Section~\ref{subsec:efficiency}.

\section{Experiments}
\label{sec:exp}

\subsection{Experimental Setup}
\label{subsec:setup}

\partitle{Datasets, TSFM Types and Baselines}
We evaluate our proposed LTSV on five widely used time series forecasting benchmarks that cover diverse temporal patterns and domains: \textbf{Electricity}~\cite{trindade2015electricityloaddiagrams20112014}, \textbf{Exchange Rate}~\cite{lai2018modeling}, \textbf{Weather}~\cite{weather_data}, \textbf{ETT}~\cite{zhou2021informer}(ETT-m2), and \textbf{Illness}~\cite{illness_data}.
Experiments are conducted on three representative time series foundation models that cover different architectural paradigms of TSFMs: \textbf{Time-MoE}~\cite{woo2024unified}, a mixture-of-experts decoding-only Transformer model; \textbf{Time-LLM}~\cite{liutimer2024timer}, which integrates large language model architectures; and \textbf{MOMENT}~\cite{goswami2024moment}, a T5-based encoder-decoder Transformer model. We compare LTSV against representative data valuation methods, including \textbf{Influence Function}~\cite{zhang2024timeinf}, and \textbf{Data Shapley}~\cite{bento2021timeshap}.

\partitle{Data Valuation Procedure}
We estimate time series data values under the setting of the time series forecasting task, which is not only a fundamental and prevalent task in the time series domain but also particularly well-suited for foundation models. For each dataset, we reserve approximately 30\% of the observations as a hold-out test set, which is strictly excluded from all data valuation computations to ensure unbiased evaluation.  
The remaining 70\% of the data constitute the target set used for data valuation. We then segment the target sequences into fixed-length temporal blocks of size $L = 100$, each treated as a unit for data valuation.  

To estimate the quality value of each sample, we adopt a five-fold partitioning strategy over the target set. In each fold, one partition is used as the target valuation subset, while the remaining four serve as the context subset. For each sample in the target valuation subset, we perform one-step in-context finetuning of the foundation model while computing the change in reference loss. The reference loss, measured by mean squared error (MSE), is calculated using the reference context both before and after finetuning. This change is then used to calculate the sample's valuation score according to Eq.(\ref{eq:blockvalue}) and Eq.(\ref{eq:samplevalue}). During in-context finetuning, we use consistent hyperparameter settings across all experiments, including learning rate $1\times10^{-5}$, Adam optimizer $(\beta_1{=}0.9, \beta_2{=}0.999, \epsilon{=}1\times10^{-8})$, batch size 1, fp32 precision, gradient clipping at 1.0, weight decay of 0.1.

\partitle{Evaluation Protocol}
To assess the effectiveness of the estimated valuations, we perform data selection based on the computed scores in the forecasting setting.  
Specifically, we compare three selection strategies:  
(1) \textbf{Top-$k$} selection, retaining the highest-valued samples;  
(2) \textbf{Bottom-$k$} selection, retaining the lowest-valued samples; and  
(3) \textbf{Full-data} fine-tuning, where all samples are used without selection, serving as a baseline.  
For each strategy, the subset of data is used to fine-tune the corresponding foundation model (ensuring that the model has not been exposed to this dataset beforehand).  
After fine-tuning, we evaluate the model on the held-out test set, using MSE and MAE (mean absolute error) as the primary metrics.
Performance differences across selection strategies reflect the quality and reliability of the proposed valuation mechanism.

\subsection{Main Experiment Results}

In addition to the three data selection strategies described in Section~\ref{subsec:setup}, we include the model performance before any fine-tuning as a reference. For both Top-$k$ and Bottom-$k$ selections, we set the retention ratio to 50\% of the target samples. This allows a direct comparison of model gains when using high-valued versus low-valued data according to LTSV scores.

As shown in Table~\ref{tab:main_results}, experimental results across multiple datasets and foundation models reveal a consistent trend: fine-tuning with the top 50\% of high-valued samples noticeably outperforms fine-tuning with the bottom 50\% of samples, often by a substantial margin. In certain scenarios, the top 50\% even achieves comparable or better results than full-data fine-tuning, indicating that carefully selected high-quality samples can be as effective as using the entire target set. Fine-tuning on the bottom 50\% of samples results in only modest or negligible improvements compared to the initial model, indicating that LTSV can distinguish less informative data from more impactful samples.

These results demonstrate two key points: (1) the valuation scores accurately reflect the contribution of each sample to model performance, and (2) selecting high-valued data based on LTSV provides a practical way to improve time series forecasting models, even with a reduced dataset.

\begin{table*}[t]
\caption{Main results on five forecasting datasets with three time series foundation models. Each cell reports MSE and MAE results. Fine-tuning with the Top-50\% of high-valued samples consistently outperforms the Bottom-50\% and often approaches or exceeds full-data fine-tuning. \textbf{Note:} For the \emph{Top} sample selection strategy, lower MSE/MAE indicates better performance; 
for the \emph{Bottom} strategy, higher MSE/MAE indicates better performance.}
\label{tab:main_results}
\centering
\footnotesize
\setlength{\tabcolsep}{3.5pt}
\begin{tabular}{l|c c|c c|c c|c c|c c}
\toprule
\textbf{Strategies} & 
\multicolumn{2}{c|}{\textbf{Electricity}} &
\multicolumn{2}{c|}{\textbf{Ex.Rate}} &
\multicolumn{2}{c|}{\textbf{Weather}} &
\multicolumn{2}{c|}{\textbf{Illness}} &
\multicolumn{2}{c}{\textbf{ETT}} \\
 & MSE & MAE & MSE & MAE & MSE & MAE & MSE & MAE & MSE & MAE \\
\midrule
\multicolumn{11}{c}{\textbf{Time-MoE}} \\
\midrule
Pre-trained          & 3.858 & 0.015 & 7.776 & 6.3e-5 & 16.94 & 3.252 & 0.924 & 0.707 & 0.126 & 3.50e-3 \\
Full-data            & 0.978 & 0.008 & 4.579 & 5.3e-5 & 16.81 & 3.006 & 0.905 & 0.772 & 0.123 & 3.46e-3 \\
\rowcolor{magenta!10} Bottom-50\%   & 1.276 & 0.009 & 5.665 & 5.7e-5 & 16.69 & 2.975 & 0.863 & 0.741 & 0.125 & 3.48e-3 \\
\rowcolor{cyan!10} Top-50\%     & 0.851 & 0.007 & 4.335 & 5.3e-5 & 15.23 & 2.919 & 0.851 & 0.728 & 0.124 & 3.48e-3 \\
\midrule
\multicolumn{11}{c}{\textbf{Time-LLM}} \\
\midrule
Pre-trained          & 3.143 & 1.541 & 2.862 & 1.488 & 0.804 & 0.743 & 1.492 & 0.985 & 0.831 & 0.698 \\
Full-data            & 2.472 & 1.407 & 0.445 & 0.593 & 0.690 & 0.722 & 1.439 & 0.932 & 0.703 & 0.601 \\
\rowcolor{magenta!10} Bottom-50\%   & 2.659 & 1.442 & 0.439 & 0.586 & 0.734 & 0.745 & 1.430 & 0.949 & 0.745 & 0.651 \\
\rowcolor{cyan!10} Top-50\%     & 2.546 & 1.401 & 0.436 & 0.585 & 0.697 & 0.724 & 1.403 & 0.931 & 0.724 & 0.605 \\
\midrule
\multicolumn{11}{c}{\textbf{MOMENT}} \\
\midrule
Pre-trained          & 0.130 & 0.326 & 0.109 & 0.283 & 0.107 & 0.278 & 0.221 & 0.191 & 0.127 & 0.301 \\
Full-data            & 0.106 & 0.281 & 0.097 & 0.263 & 0.079 & 0.235 & 0.182 & 0.157 & 0.106 & 0.249 \\
\rowcolor{magenta!10} Bottom-50\%   & 0.132 & 0.320 & 0.110 & 0.286 & 0.117 & 0.293 & 0.190 & 0.167 & 0.124 & 0.283 \\
\rowcolor{cyan!10} Top-50\%     & 0.098 & 0.268 & 0.102 & 0.273 & 0.074 & 0.215 & 0.180 & 0.152 & 0.119 & 0.286 \\
\bottomrule
\end{tabular}
\vspace{-1em}
\end{table*}

\subsection{Computational Efficiency}
\label{subsec:efficiency}

We further evaluate the computational efficiency of LTSV by comparing it with the classical influence function method on the Illness dataset under identical experimental settings, 
with all experiments conducted on the same hardware environment equipped with an NVIDIA A100 GPU.
The only difference lies in the computation mechanism: the influence function requires Hessian inversion, while LTSV performs lightweight in-context fine-tuning for each sample.

Fig.~\ref{fig:efficiency} presents the total valuation time (in seconds) versus model parameter size across representative time series models, including Linear (300), LSTM (4K), PatchTST (300K), MOMENT-base (40M), and TimeMoE-large (200M).  
The empirical results are consistent with theoretical complexity analysis. Specifically, the cost of influence-function–based valuation increases rapidly with model size, exhibiting near-cubic scaling due to second-order derivative computations. For the two foundation models, executing the full influence function becomes prohibitively time-consuming; therefore, we estimate its computational cost based on a subset of samples to approximate the total runtime.
In contrast, LTSV grows almost linearly, as it only requires a single forward–backward pass per sample during fine-tuning.

These results confirm that LTSV achieves both high efficiency and scalability, maintaining linear complexity while supporting modern foundation models with hundreds of millions of parameters. 

\begin{figure}[t]
    \centering
    \includegraphics[width=0.6\textwidth]{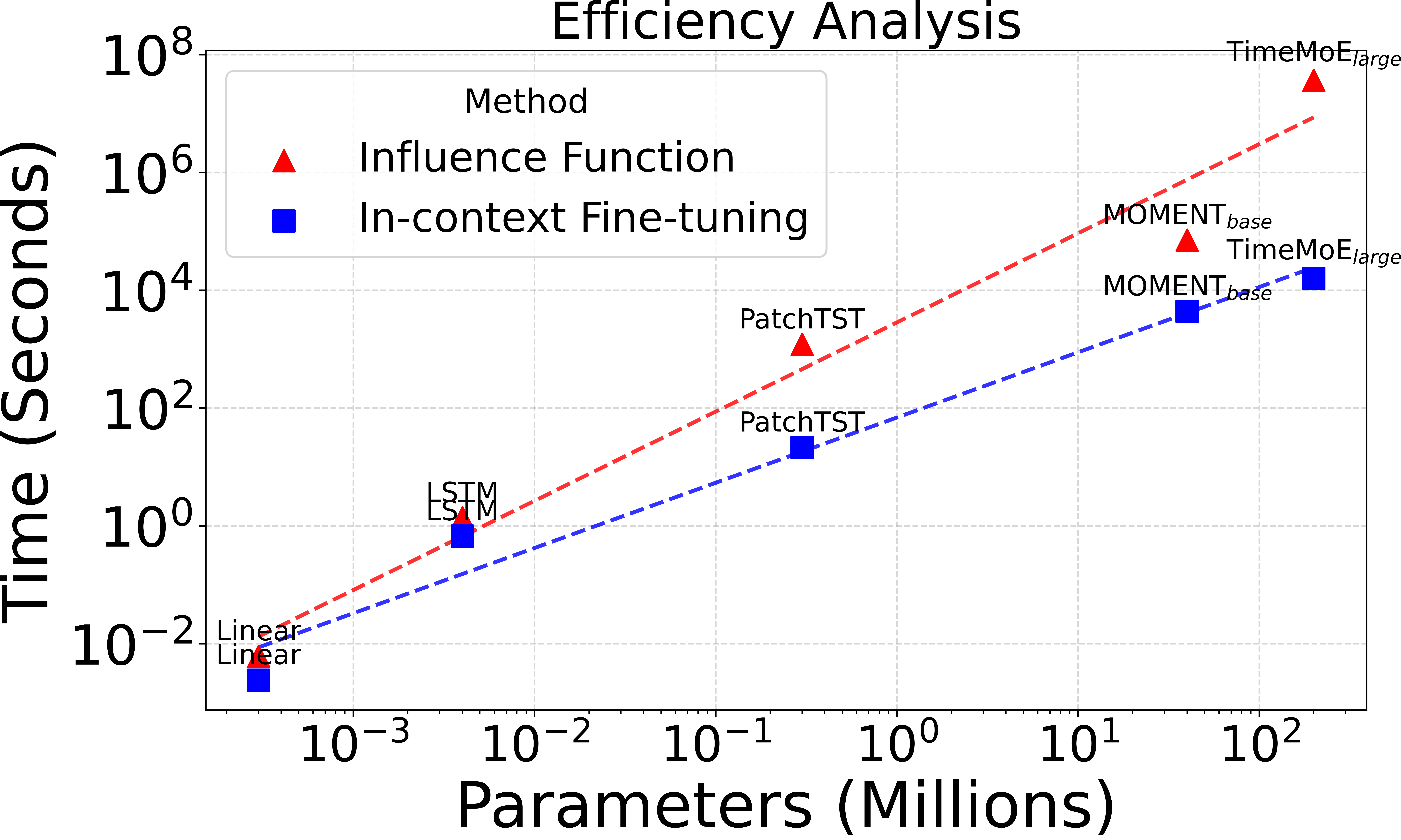}
    \vskip -0.5em
    \caption{Computational efficiency comparison between LTSV and influence function across models of varying parameter sizes.}
    \label{fig:efficiency}
    \vspace{-1em}
\end{figure}

\subsection{Generalization of Foundation Model Valuations to Diverse Downstream Models}

In this section, we investigate whether the data valuations obtained from the foundation model can generalize effectively to diverse downstream time series models. Specifically, we train each downstream model using five subsets of data: (1) the \textbf{top-valued} samples according to LTSV scores, (2) the \textbf{bottom-valued} samples from LTSV, (3) the samples ranked highest by the classical \textbf{influence function} adapted for time series~\cite{zhang2024timeinf}, (4) the samples ranked highest by the \textbf{Shapley value} method (time series version)~\cite{bento2021timeshap}, and (5) \textbf{randomly} selected samples. The selection ratio is fixed at 20\% to better highlight the performance differences among methods.
The data valuations are obtained from the \textbf{Time-MoE} foundation model. The downstream forecasting models cover a wide range of architectures, including \textbf{DLinear}~\cite{zeng2023transformers}, \textbf{PatchTST}~\cite{nie2022time}, and \textbf{PAttn}~\cite{tan2024language}, representing linear, transformer-based, and attention-based paradigms respectively.

\begin{table}[t]
\centering
\footnotesize
\setlength{\tabcolsep}{3.6pt}
\caption{Generalization results of foundation-model-based data valuations across three downstream time series models. Valuations obtained from in-context fine-tuned foundation models show strong transferability and remain highly predictive of data quality across different architectures. \textbf{Note:} For the \emph{Top} sample selection strategy, lower MSE/MAE indicates better performance; for the \emph{Bottom} strategy, higher MSE/MAE indicates better performance.}
\label{tab:results_generalization}
\begin{tabular}{l|c c|c c|c c|c c|c c}
\toprule
\textbf{Methods} & 
\multicolumn{2}{c|}{\textbf{Electricity}} &
\multicolumn{2}{c|}{\textbf{Ex.Rate}} &
\multicolumn{2}{c|}{\textbf{Weather}} &
\multicolumn{2}{c|}{\textbf{Illness}} &
\multicolumn{2}{c}{\textbf{ETT}} \\
 & MSE & MAE & MSE & MAE & MSE & MAE & MSE & MAE & MSE & MAE \\
\midrule
\multicolumn{11}{c}{\textbf{DLinear}} \\
\midrule
Random           & 2.221 & 1.353 & 0.406 & 0.562 & 0.370 & 0.514 & 1.982 & 1.259 & 0.372 & 0.506 \\
Influence~\cite{zhang2024timeinf}        & 2.074 & 1.310 & 0.360 & 0.522 & 0.322 & 0.478 & 1.718 & 1.142 & 0.355 & 0.496 \\
Shapley~\cite{bento2021timeshap}          & 2.047 & 1.301 & 0.372 & 0.529 & 0.384 & 0.520 & 1.603 & 1.102 & 0.325 & 0.475 \\
\rowcolor{magenta!10} Bottom(ours)  & 2.487 & 1.435 & 0.440 & 0.580 & 0.481 & 0.580 & 2.895 & 1.496 & 0.486 & 0.579 \\
\rowcolor{cyan!10} Top(ours)     & 2.060 & 1.304 & 0.387 & 0.544 & 0.311 & 0.468 & 1.668 & 1.130 & 0.301 & 0.459 \\
\midrule
\multicolumn{11}{c}{\textbf{PatchTST}} \\
\midrule
Random           & 1.930 & 1.206 & 0.336 & 0.496 & 0.363 & 0.502 & 1.530 & 1.050 & 0.385 & 0.465 \\
Influence~\cite{zhang2024timeinf}        & 1.830 & 1.131 & 0.286 & 0.456 & 0.355 & 0.508 & 1.099 & 0.861 & 0.314 & 0.459 \\
Shapley~\cite{bento2021timeshap}          & 1.888 & 1.145 & 0.288 & 0.460 & 0.366 & 0.516 & 1.057 & 0.844 & 0.266 & 0.411 \\
\rowcolor{magenta!10} Bottom(ours)  & 2.811 & 1.470 & 0.380 & 0.522 & 0.370 & 0.524 & 1.958 & 1.214 & 0.421 & 0.512 \\
\rowcolor{cyan!10} Top(ours)     & 1.906 & 1.187 & 0.317 & 0.475 & 0.360 & 0.498 & 1.103 & 0.873 & 0.288 & 0.421 \\
\midrule
\multicolumn{11}{c}{\textbf{PAttn}} \\
\midrule
Random           & 2.250 & 1.245 & 0.356 & 0.494 & 0.324 & 0.474 & 1.350 & 1.057 & 0.332 & 0.441 \\
Influence~\cite{zhang2024timeinf}        & 1.883 & 1.139 & 0.308 & 0.462 & 0.326 & 0.487 & 1.025 & 0.841 & 0.326 & 0.468 \\
Shapley~\cite{bento2021timeshap}          & 1.925 & 1.153 & 0.313 & 0.478 & 0.474 & 0.574 & 0.965 & 0.806 & 0.292 & 0.432 \\
\rowcolor{magenta!10} Bottom(ours)  & 2.757 & 1.456 & 0.389 & 0.528 & 0.440 & 0.539 & 1.974 & 1.201 & 0.431 & 0.520 \\
\rowcolor{cyan!10} Top(ours)     & 1.911 & 1.185 & 0.332 & 0.487 & 0.278 & 0.428 & 1.036 & 0.858 & 0.280 & 0.418 \\
\bottomrule
\end{tabular}
\vspace{-1em}
\end{table}

\begin{figure}[!h]
    \centering
    \begin{minipage}[b]{0.45\textwidth}
        \centering
        \includegraphics[width=1.0\textwidth]{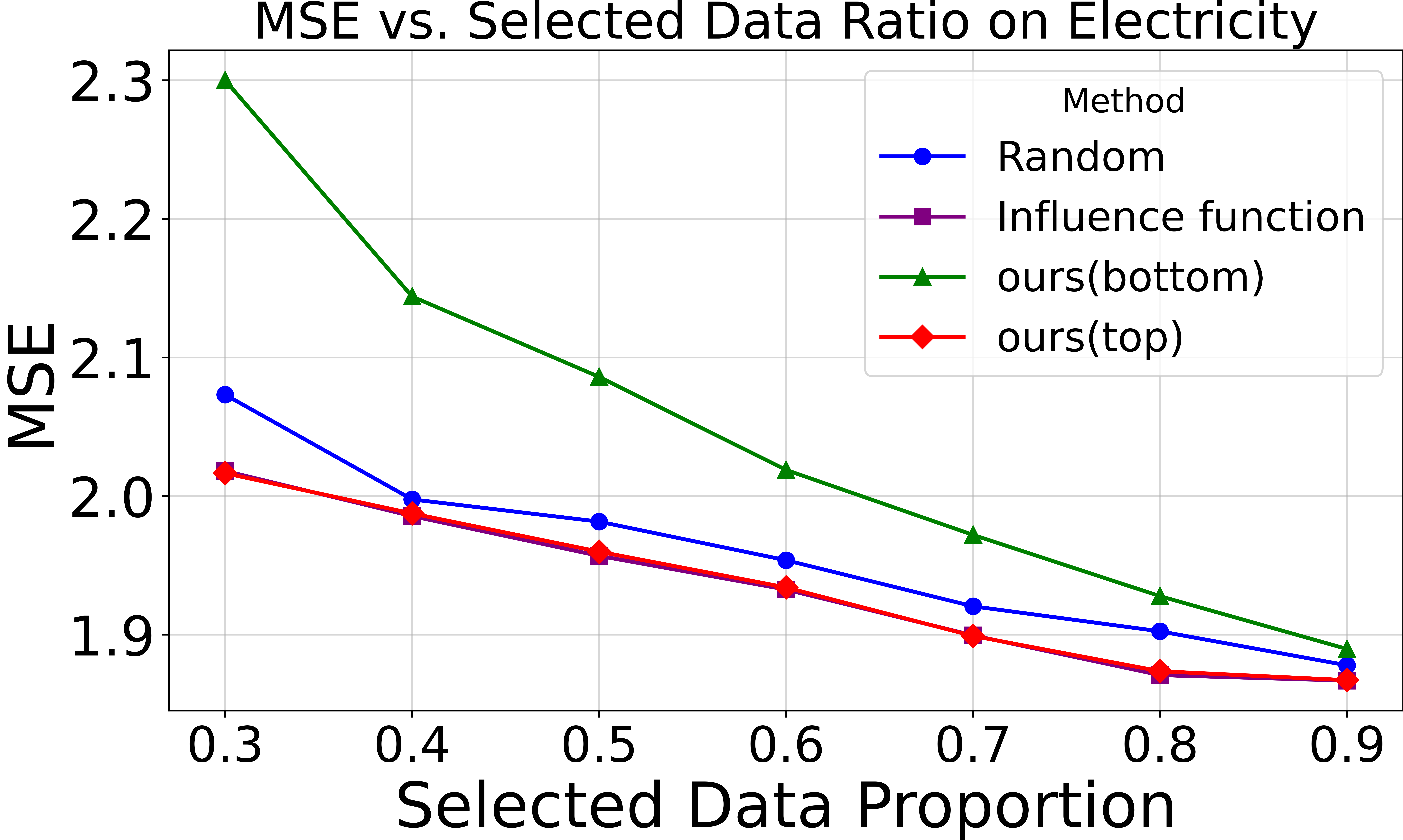}
    \end{minipage} \hfill
    \begin{minipage}[b]{0.45\textwidth}
        \centering
        \includegraphics[width=1.0\textwidth]{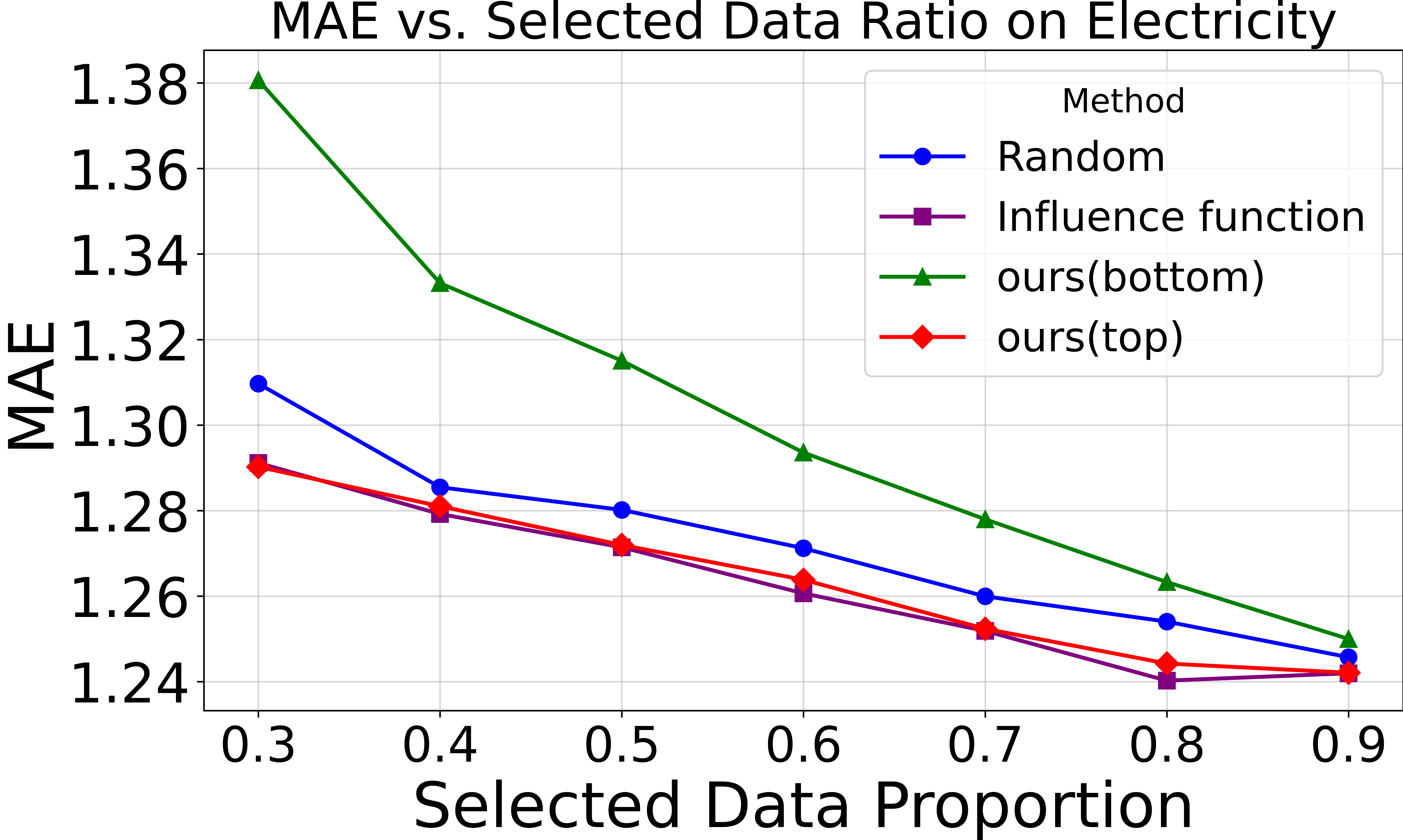}
    \end{minipage}
    \vskip -0.8em
    \caption{
    Performance variation with different selection ratios on the Electricity dataset under DLinear. 
    The curves show a consistent trend where bottom-valued samples yield higher MSE and MAE than random selection, while top-valued samples outperform both. 
    As the selection ratio increases, the performance gap among the three gradually narrows. 
    Notably, the curve of LTSV (top) remains close to that of the Influence Function, indicating strong consistency with classical valuation results.
    }
    \label{fig:selectionratio}
    \vspace{-1.5em}
\end{figure}

The results presented in Table~\ref{tab:results_generalization} demonstrate that the data valuations obtained through in-context fine-tuning on the foundation model can be effectively transferred to diverse downstream architectures. 
Across all architectures, the models trained on high-valued data consistently outperform those trained on low-valued subsets, while the random selection results generally lie in between. 
This indicates that downstream models remain sensitive to the quality of the selected data, and the valuation scores learned from the foundation model can still capture meaningful distinctions in data utility.
Moreover, the performance achieved using foundation-model valuations closely approaches that obtained by applying influence function or Shapley value estimations computed directly on each downstream model, which theoretically yield more faithful estimates of individual sample contributions.
These results highlight the strong cross-model generalization of LTSV, confirming that valuations derived from in-context fine-tuning on foundation models can serve as transferable and reliable indicators of data quality for downstream time series learning. We further analyze the effect of varying selection ratios on the Electricity dataset under the DLinear model, as shown in Fig.~\ref{fig:selectionratio}.

In addition to Time-MoE, we further conduct experiments on two other foundation models to verify whether the valuations also generalize effectively across different downstream models. The results are summarized in Fig.~\ref{fig:othertsfm}, which compares the performance of downstream models trained on top- and bottom-valued samples.

\subsection{Ablation Study on Block Length}

In the main experiments, the temporal block length is fixed at 100.  
To investigate the impact of block length on both data valuation and forecasting performance, we repeat the experiments under multiple block lengths, specifically $L \in \{50, 75, 100, 125\}$.  
The experiments are conducted on the Electricity dataset using the DLinear model, with a data selection ratio of 20\%.

As shown in Table~\ref{tab:ablation_window}, varying the block length between 50 and 125 has only a modest impact on the effectiveness of data valuation.
Across all lengths, fine-tuning on the top 20\% of high-valued samples consistently outperforms the bottom 20\%, clearly demonstrating the reliability of LTSV scores. 
Interestingly, when using block lengths of $75$ or $100$, the Top selection slightly surpasses the Influence function baseline, suggesting that a moderate block size may improve the stability and fidelity of data valuation.  
Overall, these results indicate that LTSV is robust to reasonable variations in temporal block length, while moderate blocks provide optimal performance.

\begin{figure}[t]
    \centering
    \begin{minipage}[b]{0.45\textwidth}
        \centering
        \includegraphics[width=1.0\textwidth]{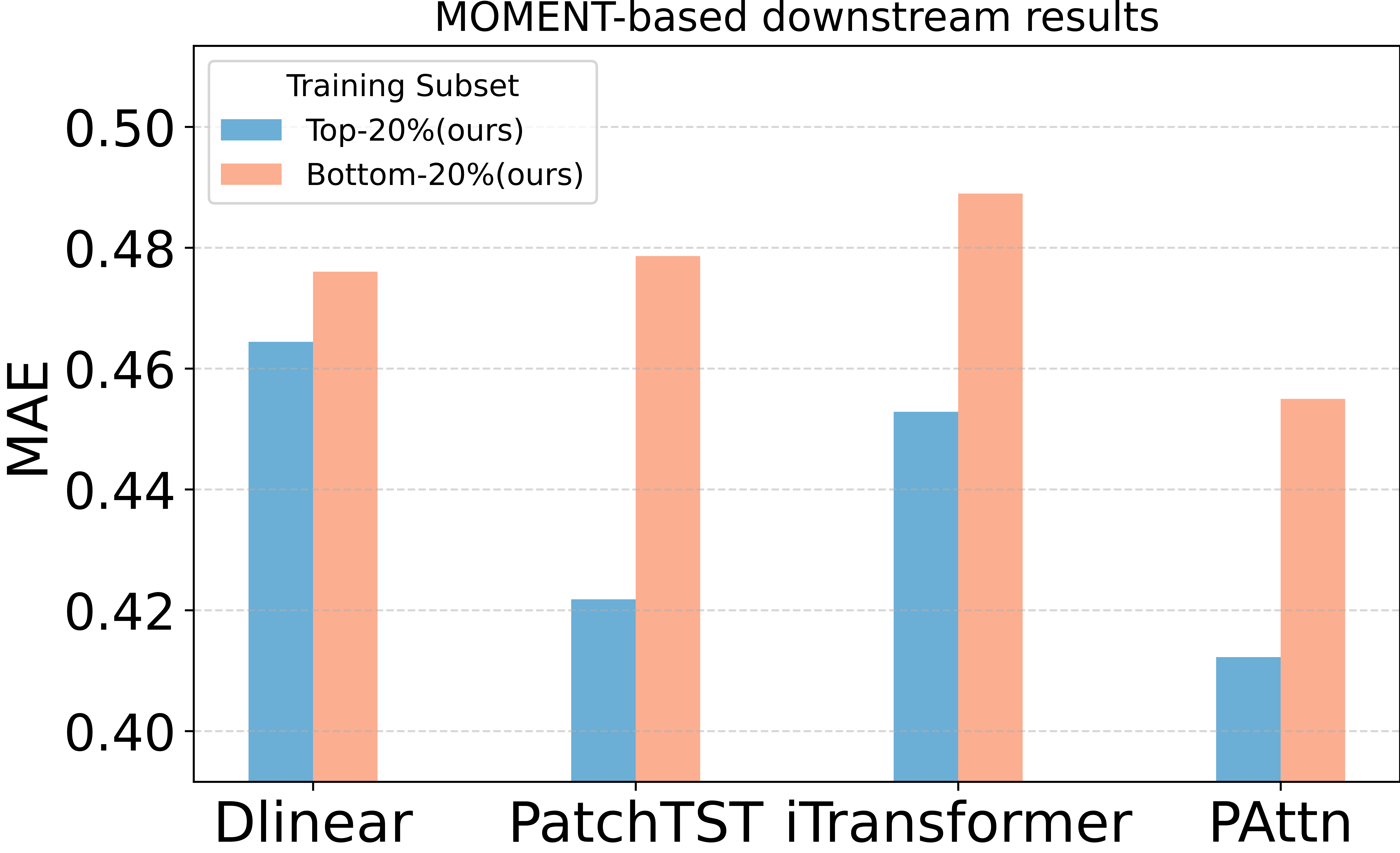}
    \end{minipage} \hfill
    \begin{minipage}[b]{0.45\textwidth}
        \centering
        \includegraphics[width=1.0\textwidth]{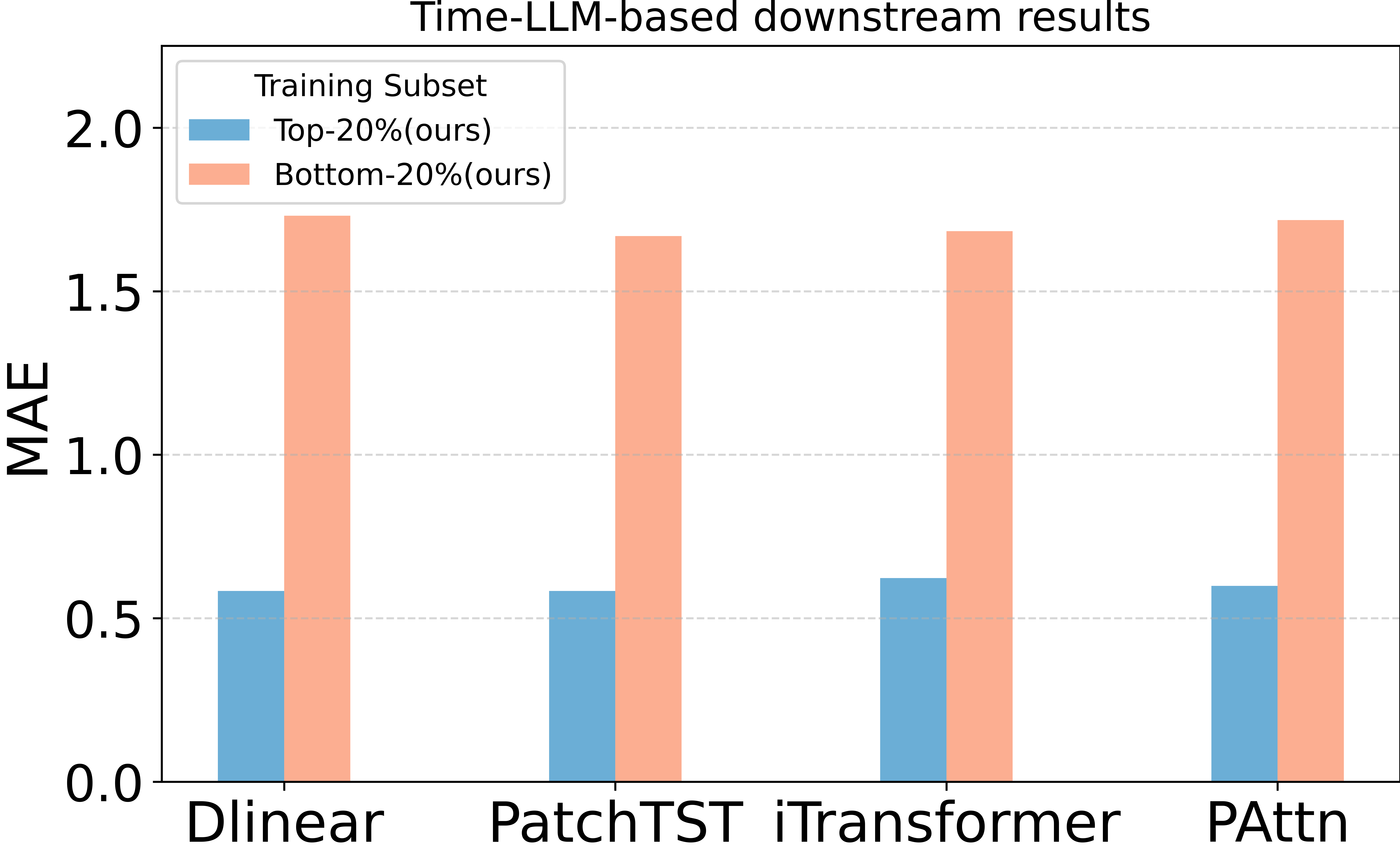}
    \end{minipage}
    \vskip -0.8em
    \caption{
    Downstream performance when trained on top- and bottom-valued samples identified by LTSV. 
    The left panel shows results based on the \textbf{MOMENT} foundation model, while the right panel corresponds to \textbf{Time-LLM}.
    Models trained on bottom-valued samples show consistently higher MAE, while top-valued samples yield much lower errors, confirming the robustness and transferability of LTSV across architectures.
    }
    \label{fig:othertsfm}
    \vspace{-1.5em}
\end{figure}
\begin{table}[!h]
\centering
\footnotesize
\setlength{\tabcolsep}{4pt}
\caption{Ablation results on the effect of temporal block length on data valuation and forecasting performance. Experiments are conducted on the Electricity dataset under the DLinear model. \textbf{Note:} For the \emph{Top} sample selection strategy, lower MSE/MAE indicates better performance; for the \emph{Bottom} strategy, higher MSE/MAE indicates better performance.}
\label{tab:ablation_window}
\begin{tabular}{l|c c|c c|c c|c c}
\toprule
\textbf{Method / Block} & \multicolumn{2}{c|}{$50$} & \multicolumn{2}{c|}{$75$} & \multicolumn{2}{c|}{$100$} & \multicolumn{2}{c}{$125$} \\
 & MSE & MAE & MSE & MAE & MSE & MAE & MSE & MAE \\
\midrule
Random           & 1.336 & 1.015 & 1.831 & 1.213 & 2.221 & 1.353 & 2.043 & 1.285 \\
Influence function~\cite{zhang2024timeinf}  & 1.281 & 0.994 & 1.678 & 1.164 & 2.074 & 1.310 & 2.017 & 1.277 \\
\rowcolor{magenta!10} Bottom (ours)      & 1.381 & 1.037 & 1.722 & 1.173 & 2.487 & 1.435 & 2.078 & 1.297 \\
\rowcolor{cyan!10} Top (ours)         & 1.307 & 1.007 & 1.668 & 1.157 & 2.060 & 1.304 & 2.017 & 1.277 \\
\bottomrule
\end{tabular}
\vspace{-1em}
\end{table}

\section{Conclusion}
\label{sec:conclusion}

In this paper, we propose LTSV, a lightweight framework for time series data valuation on foundation models via in-context fine-tuning. 
It efficiently approximates influence scores without computing Hessian-related terms and captures temporal dependencies through temporal block aggregation. 
Comprehensive experiments across five datasets and three model architectures demonstrate that LTSV provides reliable, scalable, and transferable valuations for capturing meaningful sample contributions. 

\begin{credits}
\subsubsection{\ackname}
    We would like to thank the three anonymous reviewers, the area chairs, and the program chairs for their constructive comments and efforts. 
\end{credits}

{
\small
\bibliography{dasfaa}
\bibliographystyle{splncs04}
}

\end{document}